\def\eqref#1{equation~\ref{#1}}
\def\1{\bm{1}}
\DeclareMathAlphabet{\mathsfit}{\encodingdefault}{\sfdefault}{m}{sl}
\SetMathAlphabet{\mathsfit}{bold}{\encodingdefault}{\sfdefault}{bx}{n}
\DeclareMathOperator*{\argmin}{arg\,min}
\crefname{equation}{}{}
\newcommand{\del}[1]{}
\newcommand{\indep}{\perp\!\!\!\!\perp}
\DeclareRobustCommand{\citet}[1]{\cite{#1}}
\newcommand{\citep}[1]{\cite{#1}}
\newcommand{\citeapp}[1]{} %
\newcommand{\evid}{E}  %
\renewcommand{\scriptsize}{\small}
\def\tablescale{0.8}
\titlerunning{A Distributionally-Robust Framework for Causal Estimation} %
\begin{document}

\title{A Distributionally Robust Framework for Nuisance in Causal Effect Estimation}

\author{Akira Tanimoto\inst{1}\orcidID{0000-0003-0459-3993}}

\institute{NEC Corporation}

\maketitle

\begin{abstract}
Causal inference requires evaluating models on balanced distributions between treatment and control groups, while training data often exhibits imbalance due to historical decision-making policies.
Most conventional statistical methods address this distribution shift through inverse probability weighting (IPW), which requires estimating propensity scores as an intermediate step.
These methods face two key challenges: inaccurate propensity estimation and instability from extreme weights.
We decompose the generalization error to isolate these issues—propensity ambiguity and statistical instability—and address them through an adversarial loss function.
Our approach combines distributionally robust optimization for handling propensity uncertainty with weight regularization based on weighted Rademacher complexity.
Experiments on synthetic and real-world datasets demonstrate consistent improvements over existing methods.

\end{abstract}

\section{Introduction}
\label{sec: intro}
Causal inference enables us to assess the impact of a treatment action.
Its application originated in the policy-making field~\cite{lalonde1986evaluating} including healthcare~\cite{sanchez2022causal}.
Recently, the focus was expanded to individualized decision-making such as precision medicine~\cite{sanchez2022causal}, recommendation~\cite{schnabel2016recommendations,bonner2018causal}, and advertisement~\cite{sun2015causal,Wang2015RobustTC} with the help of advanced machine learning-based methods.

We estimate the causal effect of a treatment action (e.g., prescription of a specific medicine).
That is, we need accurate predictions of both potential outcomes with and without the treatment to take its difference since the supervision of the actual effect itself is never given.
In observational data, actions are not assigned randomly but selected by past decision-makers.
Therefore, models must generalize beyond the factual data distribution to handle the systematically missing counterfactual outcomes.
This is called the fundamental problem of causal inference~\cite{shalit2017estimating}.

Conventional statistical methods for causal inference deal with this action selection bias by matching extraction or importance sampling~\cite{rosenbaum1983central}. %
A representative and versatile approach is inverse probability weighting using propensity scores (IPW)~\cite{austin2011introduction}.
IPW estimates propensity scores—the probability of past treatment decisions—then weights instances by their inverse to account for selection bias.
This two-step strategy has fundamental limitations: First, the final accuracy depends entirely on propensity score precision. Second, extreme weights can concentrate on few samples, reducing effective sample size and causing high estimation variance~\cite{kang2007demystifying}.
The overall accuracy can only be guaranteed asymptotically,
 limiting its applicability to modern non-asymptotic scenarios
such as high dimensional models as DNNs for capturing complex heterogeneity or complex action spaces.

Various countermeasures have been tried to alleviate this 
problem, such as doubly robust (DR) methods~\cite{kang2007demystifying,kennedy2020towards,dudik2014doubly} and double machine learning~\cite{chernozhukov2018double,nie2021quasi}, which are clever combinations of outcome prediction models and only weighting its residuals using estimated propensity.
Nevertheless, the IPW-based approach's limitation is the instability of the two-step procedure in non-asymptotic scenarios.
Its large estimation variance is problematic in cases of high dimensional covariates or cases where the propensity score is close to $0$ or $1$ and thus the effective sample size is limited~\cite{athey2018approximate}.

As in various other fields, advancement with deep neural networks (DNN) has gained substantial attention in causal inference literature~\cite{li2022survey}.
One of the notable advancements made when~\citet{shalit2017estimating} and~\citet{johansson2016learning} introduced DNN to causal inference was representation-level balancing through distribution discrepancy measures.
The representation extracted from the input covariates is encouraged to be {\it balanced}, i.e., independent of the action, by measuring and minimizing the discrepancy between the representation distributions conditioned on the action.
Representation balancing provides non-asymptotic performance guarantees through generalization error bounds~\cite{shalit2017estimating}.
This results in an end-to-end training procedure, free from the concerns in the intermediate estimation problem.
However, it has been pointed out that the guarantee of generalization error of representation balancing requires the unrealistic assumption of invertible representation extraction~\cite{johansson2019support,zhao2019learning,wu2020representation}.
In recent years, researchers have explored approaches that combine weighting with representation learning~\cite{Hassanpour2020Learning,wu2022learning,cheng2022learning}. However, to the best of our knowledge, this line of research has not yet yielded a theoretically grounded approach based on generalization error analysis.

We analyze the generalization error of the weighted loss function using a propensity model, which is implemented as adversarial end-to-end learning of the target and weighting models.
We define the worst-case loss with respect to the nuisance model ambiguity as a general framework.
Our adversarial loss simultaneously accounts for the true propensity's ambiguity and the statistical instability due to skewed weights.
We demonstrate our framework's effectiveness by applying it to two established methods: the doubly robust learner~\cite{kennedy2020towards} and representation-based architectures~\cite{curth2021nonparametric}.
Experiments on both synthetic and real-world datasets show consistent improvements over baseline methods.

\section{Problem setting}
\label{sec:problem}
We consider a standard causal inference framework.
We have observational data $D=\{(x^{(n)}, a^{(n)}, y^{(n)})\}_1^N$ with $N$ i.i.d. instances.
Each instance contains a $d$-dimensional background feature $x^{(n)} \in \mathcal X \subset \mathbb R^d$, a treatment action $a^{(n)} \in \mathcal A = \{0,1\}$, and an outcome $y^{(n)} \in \mathcal Y$.
In the Neyman-Rubin potential outcome framework~\cite{rubin2005causal}, the potential outcomes of both factual and counterfactual actions are expressed as random variables $\{Y_a\}_{a\in \{0,1\}}$, of which only the factual outcome is observed $\left( y^{(n)}=y_{a^{(n)}} \right)$ and the counterfactual one $y_{1-a^{(n)}}$ is missing.

Our goal is to learn a potential outcome function $f: \mathcal X \times \mathcal A \to \mathcal Y$ to estimate the causal effect $\hat \tau(x) := \hat f(x,a=1) - \hat f(x, a=0)$ under the given background feature $x$, or to learn $\hat \tau$ directly.
The estimated effect $\hat \tau(x)$ is expected to approximate the true individualized causal effect defined as the conditional average treatment effect (CATE).
    \begin{align}
        \label{eq: CATE}
        \tau(x) = \mathbb E\left[Y_1 - Y_0 |x \right]
    \end{align}
A typical metric for the estimation accuracy is the MSE of $\tau(x)$, also known as the precision in estimating heterogeneous effects (PEHE) $\mathbb E_x \left[ \left(\tau(x) - \hat \tau(x)\right)^2 \right]$.

As a sufficient condition for consistent learnability of the CATE, we follow the standard set of assumptions in the potential outcome framework~\cite{imbens2015causal}.
\begin{itemize}
    \item $Y^{(n)} \indep A^{(n')} ~~ \forall n\neq n'$ (Stable Unit Treatment Value Assumption)
    \item $(Y_0, Y_1)\indep A \mid X$ (unconfoundedness) 
    \item $0<\mu(a | x)<1 ~~ \forall x,a$ (overlap) 
  \end{itemize}

\section{Related work}
\label{related}

\paragraph{Inverse probability weighting with propensity scores (IPW) and its extension}
IPW is a well-known and universal approach to various causal inference problems.
It balances the distribution by weighting instances with the inverse of their estimated propensity scores.
Taking their inverse increases the estimation variance when the propensity score estimates are extreme.

Orthogonal statistical learning, such as the DR-Learner~\cite{kennedy2020towards} and the R-Learner~\cite{nie2021quasi}, utilize two kinds of nuisance models of predicting outcomes and predicting treatments.
These methods have been shown to be robust to estimation errors for the first-stage nuisance parameters of propensity and outcome models.
Specifically, the errors do not affect the final estimation in the first-order sense of the Taylor expansion.
However, their main limitation lies in high estimation variance when applied to non-asymptotic situations~\cite{athey2018approximate}.

Aiming at robustness for complex DNN-based models, we therefore develop a unified framework that is based on the orthogonal method but also cares about extreme weights.

\paragraph{Representation-based method using neural networks}
Starting with~\cite{johansson2016learning,shalit2017estimating}, a number of causal inference methods based on DNNs and representation balancing have been proposed~\cite{li2022survey}.
The representation-based methods have been demonstrated to be superior in complex problems such as nonlinear responses~\cite{johansson2016learning}, large actions spaces~\cite{tanimoto2021regret} including continuous~\cite{lopez2020cost} or structured spaces~\cite{harada2021graphite}, and so forth.
These are end-to-end methods based on adversarial formulations.
They virtually evaluate the worst-case with respect to the uncertainty of the model by distribution discrepancy between the representations of covariates in treated and control groups.
On the other hand, representation balancing has certain limitations in an unrealistic theoretical assumption that the representation extractor should be invertible.
It is shown that unobservable error terms arise when the invertibility is violated~\cite{johansson2019support,zhao2019learning,wu2020representation}.

A solution to this problem is the representation decomposition~\cite{Hassanpour2020Learning,wu2022learning,cheng2022learning}. 
They aim to identify confounding factors that affect both action selection and the outcomes and weights with only those factors.
Joint optimization approaches have also been proposed for ATE estimation~\cite{shi2019adapting}.
However, their formulations have no guarantee as a joint optimization, e.g., in the form of generalization error bound for all possible combinations of the target and nuisance models.
Even though estimation of weights with other model parameters (i.e., outcome model and representations) fixed is guaranteed, and vice versa, though joint optimization may lead to unexpected results.
Weights should be optimized to balance the distributions, but especially when the noise is heterogeneous, lowering the weights to the noisier regions may help the (wrongly) weighted loss of the outcome prediction model.
Thus, we aim at a principled and versatile weighting method while incorporating the advantages of end-to-end modeling by adversarial formulation.

\paragraph{Distributionally robust optimization}

Our proposed method can be interpreted in the context of distributionally robust optimization (DRO).
DRO aims to achieve robust learning against discrepancies between the empirical distribution of training data and the actual test distribution~\cite{rahimian2019distributionally}.
DRO is formalized as follows.
\begin{align}
    \mathfrak R_\mu(\ell \circ \Theta) := \frac{1}{N} \underset{D, \sigma \sim \{\pm 1\}^N}{\mathbb{E}}\left[\sup _{\theta \in \Theta} \sum_{i=1}^N \sigma_n w^{(n)}_\mu \tau_\theta \left(x^{(n)}\right)\right].
\end{align}
Typically, the ambiguity set $\mathcal U$ is defined by small perturbations to the empirical distribution.
In contrast, we address causal inference where training and test distributions can differ significantly, which difference is roughly estimated by the propensity.
We further address the ambiguity of the propensity estimation and statistical instability due to extreme weights, which is realized by design of $\mathcal U$ in DRO.

\paragraph{Pessimism in offline reinforcement learning}
Recent efforts in offline reinforcement learning revealed the benefit of pessimism on the candidate assessment~\cite{rashidinejad2021bridging,buckman2021importance}.
In reinforcement learning, we consider modeling the cumulative expected reward in the long run as the Q function for assessing each action at each time step.
The Q function is supposed to be maximized with respect to action $a$ during the inference phase.
If the estimation error on an action is optimistic, i.e., if the Q value is overestimated, the action is likely to be selected over other better candidates.
Therefore, conservative modeling of Q-function is preferable~\cite{kumar2020conservative}, i.e., training a model to estimate below the true value when uncertain.
The provable benefit of pessimism has been revealed in recent years~\cite{buckman2021importance}.
We apply this pessimism principle to weighted estimation in causal inference; that is, our method pessimistically estimates the weighted loss with respect to the uncertainty of the weights.
Our method applies the principle of pessimism in training that minimize the balanced loss with loss uncertainty due to propensity ambiguity and extreme weights, which can be evaluated by the generalization error analysis.

\section{Nuisance-Robust Weighting Network}
\label{sec: method}
This section presents our general framework for making plug-in loss functions robust to nuisance uncertainty. 
We first establish the theoretical foundation of our adversarial reformulation. We then implement it through constrained optimization using regularization techniques.
To demonstrate the generality of our approach, we apply it to two distinct architectures: the doubly-robust network (DRNet) for weighted estimation and the shared representation network (SNet) for representation-based learning.

\subsection{Adversarial Reformulation of Plug-in Nuisance}
\label{subsec: general loss}

Most weighting-based approaches are formalized as the following two-step procedure.
That is, 1) estimate the nuisance propensity model $\hat \mu$ with its empirical evidence $\hat E$ (e.g., the likelihood)
\begin{align}
    \label{eq: nuisance estimation}
\hat \mu = \argmin_{\mu \in M} \hat \evid (\mu) 
    \end{align}
and 2) plug it into the target empirical risk $\hat L$ (e.g., an MSE of the CATE estimator $\hat \tau_\theta$)
\begin{align}
\label{eq: plug-in estimation}
    \hat \theta &= \argmin_{\theta \in \Theta} \hat L(\theta; \hat \mu).
\end{align}
The assumed model classes are denoted by $M$ and $\Theta$.
A typical form of \cref{eq: plug-in estimation} is instance weighted loss:
\begin{align}
    \label{eq: weighted loss}
    \hat L(\theta; \mu) = \frac{1}{N} \sum_n^N w^{(n)}_\mu \ell^{(n)}(\theta),
\end{align}
where $w_\mu$ is the instance weight defined by $\mu$ and $\ell$ is the instance-wise loss without weights.
Although we do not necessarily assume this product form of $w$ and $\ell$, we assume that Lipschitz continuity and the upper bound of (weighted) instance-wise loss are the product of the constant and the weight.

Plug-in estimators typically analyze generalization error through stage-wise convergence rates~\cite{oprescu2019orthogonal}.
For complex model classes, both convergence rates and their coefficients matter.
We therefore adopt an end-to-end approach analyzing generalization error directly~\citet{shalit2017estimating}, avoiding two-step estimation issues.

Let the true propensity function $\mu^0(x) = \mathbb E[a|x]$ and its best approximation $\mu^* := \argmin_{\mu \in M} \evid(\mu)$ where $\evid$ denotes the expected loss function for the nuisance.
We assume that $\evid$ is a proper loss, i.e., $\mu^*=\mu^0$ when $\mu^0 \in M$.
Then, the generalization error can be decomposed as follows.
\begin{align}
        \label{eq: generalization error decomposition}
        L(\theta;\mu^0) =& \underbrace{L(\theta;\mu^0) - L(\theta;\mu^*)}_{(a)} \notag\\
        &+\underbrace{L(\theta;\mu^*) - \max_{\mu \in M} \left\{ \hat L(\theta; \mu)\right\}}_\text{(b)} + \max_{\mu \in M} \left\{ \hat L(\theta; \mu)\right\}
    \end{align}
The term (a) is the misspecification error of the nuisance class $M$, which would be zero if the true propensity $\mu^0$ is in $M$.
The term (b) represents the optimistic-side error of the loss $\max_{\mu \in M} \left\{ \hat L(\theta; \mu)\right\}$.
Our strategy is to define a learning objective that balances these terms.
That is, we define the last term as our objective
\begin{align}
    \label{eq: constrained robust loss}
    \hat J(\theta) = \max_{\mu \in M} \hat L(\theta; \mu)
\end{align}
and control the tradeoffs by $M$.

When taking $M$ too large, the misspecification (a) is avoided but the gap between the objective $\hat J(\theta)$ and the generalization error $L(\theta; \mu^0)$ would be large.
To control this tradeoff, we define the ambiguity set of the propensity
\begin{align}
    \label{eq: def ambiguity set}
    \mathcal U = \left\{\mu \in M_0 \;\middle|\; \hat \evid(\mu) \le c \right\},
\end{align}
where $c$ is the tolerance hyperparameter.
A reasonable choice of the tolerance would be by the validation error of the propensity, e.g., $\mathcal U = \left\{\mu \;\middle|\; \hat \evid(\mu) \le \hat \evid(\hat \mu_\mathrm{es}) \right\},$ where $\hat \mu_\mathrm{es}$ is the early stopping solution.
Selecting the base class $M_0$ as a flexible one such as neural networks and restricting it reasonably reduces the gap between $L(\theta; \mu^0)$ and $\hat J(\theta)$ while containing the true propensity in the class.

The second term (b) in \cref{eq: generalization error decomposition} is upper-bounded by the excess risk of the weighted loss since $\mu^* \in M$.
\begin{align}
    \text{(b)}
    \le& L(\theta; \mu^*) - \hat L(\theta; \mu^*)
\end{align}
We can establish a high-probability bound for this term by introducing a weighted variant of the Rademacher complexity with a weighting function $\mu$: %
\begin{align}
    \label{def: weighted Rademacher complexity}
    &\mathfrak R_\mu(\ell \circ \Theta) := \frac{1}{N} \underset{D, \sigma \sim \{\pm 1\}^N}{\mathbb{E}}\left[\sup _{\theta \in \Theta} \sum_{i=1}^N \sigma_n w^{(n)}_\mu \tau_\theta \left(x^{(n)}\right)\right].
\end{align}

Then, we have the following upper-bound.
\begin{theorem}
    \label{thm: weighted excess risk bound with weighted Rademacher}
    Suppose that the instance-wise loss is bounded by $c'$ as $w^{(n)}_\mu \ell^{(n)}(\theta)\le c'$. Then, for any $\delta>0$, with probability at least $1-\delta$ over the choice of a sample $D$, the following holds for all $\theta \in \Theta$.
    \begin{align}
        L(\theta; \mu) - \hat L(\theta; \mu) \le 2 \mathfrak R_\mu(\ell \circ \Theta) + \frac{c'}{2} \sqrt{\frac{\log\left(1/\delta\right)}{N}}.
    \end{align}
\end{theorem}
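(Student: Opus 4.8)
The plan is to treat the claim as a standard uniform-convergence bound for the \emph{weighted} loss class $\{\, w^{(n)}_\mu \ell^{(n)}(\theta) : \theta \in \Theta \,\}$, with the nuisance $\mu$ held \emph{fixed} throughout, so that no supremum over the ambiguity set enters at this stage. Everything reduces to controlling the single uniform deviation
\[
\Phi(D) := \sup_{\theta \in \Theta} \bigl( L(\theta;\mu) - \hat L(\theta;\mu) \bigr),
\]
since $L(\theta;\mu) - \hat L(\theta;\mu) \le \Phi(D)$ for every $\theta \in \Theta$. Recalling that $L(\theta;\mu) = \E[w_\mu \ell(\theta)]$ is the population weighted loss and $\hat L$ its empirical average (so $\E_D[\hat L(\theta;\mu)] = L(\theta;\mu)$), I would bound $\Phi(D)$ with the two classical pillars: a concentration step that controls $\Phi(D)$ around its mean, and a symmetrization step that relates that mean to $\mathfrak R_\mu(\ell\circ\Theta)$.

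For concentration I would verify the bounded-differences property. Replacing one of the $N$ i.i.d. triples by an independent copy changes $\hat L(\theta;\mu)$ by at most $c'/N$ uniformly in $\theta$, because the weighted instance loss $w^{(n)}_\mu \ell^{(n)}(\theta)$ lies in $[0,c']$ by hypothesis; since $|\sup_\theta f - \sup_\theta g| \le \sup_\theta|f-g|$, the same $c'/N$ bounds the change in $\Phi$. McDiarmid's inequality then yields, with probability at least $1-\delta$,
\[
\Phi(D) \le \E_D[\Phi(D)] + \frac{c'}{2}\sqrt{\frac{\log(1/\delta)}{N}},
\]
which is precisely the deviation term in the statement; this is the only place the boundedness hypothesis $w^{(n)}_\mu \ell^{(n)}(\theta)\le c'$ is used for the tail.

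For symmetrization I would introduce a ghost sample $D'$, an independent copy of $D$, with per-instance weights and losses $\tilde w^{(n)}_\mu, \tilde\ell^{(n)}(\theta)$, and write $L(\theta;\mu)=\E_{D'}[\hat L'(\theta;\mu)]$. Pushing the supremum through the expectation by Jensen gives
\[
\E_D[\Phi(D)] \le \E_{D,D'}\!\left[\sup_{\theta\in\Theta}\frac{1}{N}\sum_{n=1}^N\bigl(\tilde w^{(n)}_\mu \tilde\ell^{(n)}(\theta) - w^{(n)}_\mu \ell^{(n)}(\theta)\bigr)\right].
\]
Each summand is antisymmetric under swapping the $n$-th coordinates of $D$ and $D'$, so I may insert i.i.d. Rademacher signs $\sigma_n$ without changing the expectation; splitting the difference and using subadditivity of the supremum bounds the right-hand side by $2\mathfrak R_\mu(\ell\circ\Theta)$ (the symmetrization produces exactly the Rademacher complexity of the weighted loss class). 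Combining with the concentration bound proves the theorem.

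The main obstacle---really the only delicate point---is the bookkeeping of the $\mu$-dependent weights under symmetrization. Because $w^{(n)}_\mu$ is a function of $x^{(n)}$, swapping the $n$-th coordinate with its ghost copy recomputes the weight on the swapped covariate; this is exactly what makes each summand genuinely antisymmetric and licenses inserting $\sigma_n$, but it relies on $\mu$ being fixed rather than adversarially optimized. I would therefore stress that the boundedness assumption is imposed on the \emph{product} $w^{(n)}_\mu\ell^{(n)}(\theta)$ rather than on $\ell$ alone, so that a single constant $c'$ simultaneously serves as the bounded-differences constant in McDiarmid and the range of the symmetrized summands, and that the entire argument treats the weighted loss as one function class indexed by $\theta$.
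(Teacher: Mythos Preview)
Your proposal is correct and follows essentially the same two-step skeleton as the paper: McDiarmid's inequality for the concentration of $\Phi(D)$ around its mean, then ghost-sample symmetrization to bound $\E_D[\Phi(D)]$ by twice a weighted Rademacher complexity. The only substantive difference is that the paper defines $\mathfrak R_\mu(\ell\circ\Theta)$ with the \emph{predictor} $\tau_\theta(x^{(n)})$ inside rather than the loss $\ell^{(n)}(\theta)$, so after symmetrization the paper invokes one additional element-wise contraction step (a Ledoux--Talagrand-type lemma using the Lipschitz constant of the instance loss) to pass from $\sum_n \sigma_n w^{(n)}_\mu \ell^{(n)}(\theta)$ to $\sum_n \sigma_n w^{(n)}_\mu \tau_\theta(x^{(n)})$; your write-up stops at the weighted \emph{loss} class, which is the natural reading of the notation $\ell\circ\Theta$ but not how the paper actually defines the quantity.
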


The weighted Rademacher complexity $\mathfrak R_\mu(\ell \circ \Theta)$ depends on the class $\Theta$.
For a bounded linear class, which serves as a typical example, an upper bound can be established as follows.

\begin{theorem}
    \label{thm: weighted Rademacher}
    Let $\Theta$ be a bounded linear function class, i.e., $f_\theta(x) = \theta^\top x$ with $\|\theta\|\le B$.
    Furthermore, assume that $\|x\|_2 \le X$ for all $x \in \mathcal X$.
    Then, the following holds:
    \begin{align}
        \mathfrak R_\mu(\ell \circ \Theta) \le \frac{BX}{N} \sqrt{\mathbb E_D \|w_\mu\|_2^2}.
    \end{align}
\end{theorem}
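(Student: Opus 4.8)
The plan is to exploit the linearity of $f_\theta(x)=\theta^\top x$ so that the supremum over the norm ball $\{\|\theta\|\le B\}$ collapses to a single dual-norm evaluation, after which only a Rademacher second-moment computation remains. First I would fix the sample $D$ and rewrite the inner quantity. Since $\tau_\theta(x^{(n)})=\theta^\top x^{(n)}$ is linear in $\theta$, the summand factors as $\sum_n \sigma_n w^{(n)}_\mu \theta^\top x^{(n)} = \theta^\top\big(\sum_n \sigma_n w^{(n)}_\mu x^{(n)}\big)$, so that
\begin{align}
\sup_{\|\theta\|\le B}\,\sum_{n=1}^N \sigma_n w^{(n)}_\mu \tau_\theta(x^{(n)}) = B\,\Big\|\sum_{n=1}^N \sigma_n w^{(n)}_\mu x^{(n)}\Big\|_2,
\end{align}
using the Cauchy--Schwarz identity $\sup_{\|\theta\|\le B}\theta^\top v = B\|v\|_2$, attained at $\theta = Bv/\|v\|_2$. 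Substituting into the definition of $\mathfrak R_\mu(\ell\circ\Theta)$ gives the exact expression $\frac{B}{N}\,\mathbb E_{D,\sigma}\big\|\sum_n \sigma_n w^{(n)}_\mu x^{(n)}\big\|_2$.

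Next I would apply Jensen's inequality to the concave map $t\mapsto\sqrt t$ to pull the expectation inside a square root, reducing the problem to the second moment of the random vector $\sum_n \sigma_n w^{(n)}_\mu x^{(n)}$. Conditioning on $D$ and expanding the squared Euclidean norm,
\begin{align}
\mathbb E_\sigma\Big\|\sum_n \sigma_n w^{(n)}_\mu x^{(n)}\Big\|_2^2 = \sum_{n,m}\mathbb E[\sigma_n\sigma_m]\,w^{(n)}_\mu w^{(m)}_\mu (x^{(n)})^\top x^{(m)} = \sum_n (w^{(n)}_\mu)^2\,\|x^{(n)}\|_2^2,
\end{align}
where the off-diagonal terms vanish because the Rademacher signs are independent and mean-zero with unit variance, leaving only the diagonal. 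Bounding $\|x^{(n)}\|_2\le X$ for every $n$ and then taking the outer expectation over $D$ yields $\mathbb E_{D,\sigma}\|\,\cdot\,\|_2^2\le X^2\,\mathbb E_D\|w_\mu\|_2^2$. Combining this with the Jensen step gives $\mathfrak R_\mu(\ell\circ\Theta)\le \frac{B}{N}\sqrt{X^2\,\mathbb E_D\|w_\mu\|_2^2}=\frac{BX}{N}\sqrt{\mathbb E_D\|w_\mu\|_2^2}$, which is the claim.

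This argument is essentially a routine specialization of the classical linear-class Rademacher bound, so I do not anticipate a genuine obstacle. The only points requiring care are the correct orientation of Jensen's inequality (expectation inside, not outside, the square root) and confirming that the hypothesis $\|\theta\|\le B$ refers to the Euclidean norm, so that the dual norm surfacing in the supremum is again $\ell_2$ and matches the assumption $\|x\|_2\le X$. Were a different norm intended, the dual norm would alter the feature-dependent factor and the bound would carry a correspondingly different geometric constant; under the stated $\ell_2$ assumptions, however, the two norms are self-dual and the computation closes exactly.
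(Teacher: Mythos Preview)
Your proof is correct and follows essentially the same route as the paper: factor $\theta$ out by linearity, take the supremum over the $\ell_2$ ball to obtain $B\|\cdot\|_2$, apply Jensen to pass to the second moment, and then use Rademacher orthogonality together with $\|x^{(n)}\|_2\le X$ to finish. The paper labels the last step ``Cauchy--Schwarz'' but your explicit expansion of the cross terms is the same computation, only spelled out more carefully.
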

The proofs for these theorems can be found in \cref{sec: appendix proof}.

Consistent with findings and proposals in several prior studies~\cite{awasthi2024best,tanimoto2022improving,swaminathan2015counterfactual}, our analysis also demonstrates that the mean squared weight is related to the stability of the loss (b) in \cref{eq: generalization error decomposition}.
In other words, assigning too large weights to a small fraction of the sample compromises the effective sample size.
Therefore, to mitigate the impact of (b), we introduce a regularization on $\mu$ in the form of the squared weights, albeit at the potential cost of increasing the misspecification error (a) in \cref{eq: generalization error decomposition}.
More precisely, we define the class for restricted squared weights $\mathcal R$ as follows.
\begin{align}
    \mathcal R = \left\{ \mu \in M_0 \middle| \frac{1}{N} \sum_n^N \left(w^{(n)}_\mu\right)^2 \le C \right\},
\end{align}
where $C>0$ is a hyperparameter.
Then, we define the class for the nuisance as $M = \mathcal U \cap \mathcal R$.

\subsection{Nuisance-robust Doubly-robust Network (NuDRNet)}
\label{subsec: baseline method}
Under the aforementioned approach, next, we will discuss the application to specific estimation methods.
Among two-step methods with weighting,
we take the doubly-robust learner or DRNet~\cite{kennedy2020towards} as a simple but clever baseline method.
DRNet regresses a transformed target variable on $x$, which is calculated by the plug-in CATE estimate $\hat f_1(x^{(n)}) - \hat f_0(x^{(n)})$ with residual adjusted with weights:
\begin{align}
    \label{eq: transformed outcome DR}
    z^{(n)}_{\hat \mu} = &\hat f_1(x^{(n)}) - \hat f_0(x^{(n)}) +\frac{y_1^{(n)}-\hat f_1(x^{(n)})}{\hat \mu(x^{(n)})}a^{(n)} \\
    &-\frac{y_0^{(n)}-\hat f_0(x^{(n)})}{1-\hat \mu(x^{(n)})}(1-a^{(n)}).
\end{align}
This transformed target $z$ approximates CATE $\tau(x)$ in expectation when either the outcome models $(\hat f_1, \hat f_0)$ or the weighting model $\hat \mu$ is accurate.

We propose nuisance-robust DRNet (NuDRNet) as an extension of DRNet.
Our approach starts with a pre-trained propensity $\hat \mu$ as the initial solution $\mu_0$ and a randomly initialized parameter $\theta$.
We then perform adversarial optimization as described in \cref{eq: constrained robust loss}, using the mean squared error: $\hat L(\theta; \mu) = \frac{1}{N}\sum_{n}\left(z^{(n)}_\mu - \tau_\theta(x^{(n)})\right)^2.$
Unlike DRNet, which uses a fixed pre-trained $\hat \mu$, NuDRNet perturbs $\hat \mu$ adversarially during training of $\hat \tau_\theta$.

Next, we discuss how to incorporate the constraint $\mu \in M$ into a gradient-based update.
A typical implementation is as regularization term such as $\alpha \max\{0, \hat \evid(\mu) -  c\},$
where $c$ is the tolerance parameter and is set to the evidence of the pre-trained solution with early stopping $c=\hat \evid(\mu_0)$.
However, since $-\hat L(\theta;\mu)$ is not convex with respect to $\mu$, the regularization cannot reproduce the constrained optimization within $\mathcal U$.
To address this issue, we implement the augmented Lagrangian method~\cite{bertsekas2014constrained} for handling the constraint $\mu \in \mathcal U$.

For the weight stability set $\mathcal R$, we employ squared weights simply as a regularization term.
Let 
\begin{align}
    \label{def: weight}
    w_\mu^{(n)} = \frac{a^{(n)}}{\mu(x^{(n)})} + \frac{1-a^{(n)}}{1-\mu(x^{(n)})}
\end{align}
the weight under the nuisance function $\mu$.
Finally, our adversarial objective at the $k$-th epoch is the following.
\begin{align}
    \hat J(\theta, \mu) = &\frac{1}{N}\sum_{n}\left(z^{(n)}_\mu - \tau_\theta(x^{(n)})\right)^2 - \beta \frac{1}{N}\sum_n \left( w^{(n)}_\mu\right)^2 \\
    & - \alpha_k \max\{0, \hat \evid(\mu) -  c\} \\
    & - \lambda_k \left(\max\{0, \hat \evid(\mu) -  c\}\right)^2. \label{eq: adversarial objective}
\end{align}
In each epoch, we minimize $\hat J(\theta, \mu)$ with respect to $\theta$ and maximize with respect to $\mu$.
Overall, our loss controls the error due to the uncertainty of $\mu$ (\ref{eq: generalization error decomposition}-a) by maximizing the first term with respect to $\mu$ under the likelihood constraint in the third and the fourth terms, while simultaneously controlling the estimation variance of the weighted empirical loss (\ref{eq: generalization error decomposition}-b) by flattening the weight with the second term.
The parameters of evidence terms $\alpha_k$ and $\lambda_k$ are updated according to the augmented Lagrangian method. 
The whole algorithm is summarized in \cref{alg1} and \cref{fig: NuNet architecture} illustrates the architecture.

\begin{algorithm}[tb]
    \caption{Nuisance-robust Doubly-robust Network (NuDRNet)}
    \label{alg1}
    \begin{algorithmic}[1]
      \REQUIRE Training data $D = \{(x^{(n)}, a^{(n)}, y^{(n)})\}_n$, hyperparameters $\alpha_0, \gamma, \beta$, validation ratio $r$
      \ENSURE Trained network parameter $\theta$ and validation error
      \STATE Train $f_1,f_0,\mu$ by an arbitrary supervised learning method, e.g.: \\
      $\hat f_a \leftarrow \argmin_{f_a} \frac{1}{N} \sum_{n: a^{(n)}=a} (y^{(n)}-f_a(x^{(n)}))^2$ for each $a\in \{0,1\}$,\\
      $\mu_0 \leftarrow \argmin_{\mu} - \frac{1}{N} \sum_{n} a^{(n)}\log \mu(x^{(n)}) + (1-a^{(n)})\log (1-\mu(x^{(n)}))$
      \STATE Split train and validation $D_\mathrm{tr}, D_\mathrm{val}$ by the ratio $r$ \\
      \STATE Initialize $k \leftarrow 0$, $\mu \leftarrow \mu_0$, and $\theta$ randomly. \\
      \WHILE{Convergence criteria is not met}
      \FOR {Each sub-sampled mini-batch from $D_\mathrm{tr}$}%
      \STATE Update parameters with objective \cref{eq: adversarial objective} and step sizes $\eta_\theta, \eta_\mu$ from optimizes: \\
      \STATE $\theta \leftarrow \theta - \eta_\theta \nabla_\theta \hat J(\theta, \mu)$ \\
      \STATE $\mu \leftarrow \mu + \eta_\mu \nabla_\mu \hat J(\theta, \mu)$ \\
      \ENDFOR
      \STATE $g_k \leftarrow \max\{0, \hat \evid (\mu_k) - \hat \evid (\mu_0)\}$
      \STATE $\alpha_{k+1} \gets \alpha_k + \lambda g_k$
      \IF {Constraint violation is not improved enough, i.e., $g_k < c g_{k-1}$}
        \STATE $\lambda \gets \gamma \lambda $\\
      \ENDIF
      \STATE $k \leftarrow k+1$
      \STATE Check convergence criterion with validation error $\frac{1}{N_\mathrm{val}}\sum_{n \in D_\text{val}} (z_{\mu_0}^{(n)} - \tau_\theta(x^{(n)}))^2$

      \ENDWHILE
      \STATE {\bf return} $\theta$ and the last validation error for model selection
    \end{algorithmic}
  \end{algorithm}

\begin{figure}[htbp]
    \centering
    \subfigure[NuDRNet]{
        \includegraphics[keepaspectratio,width=0.46\textwidth]{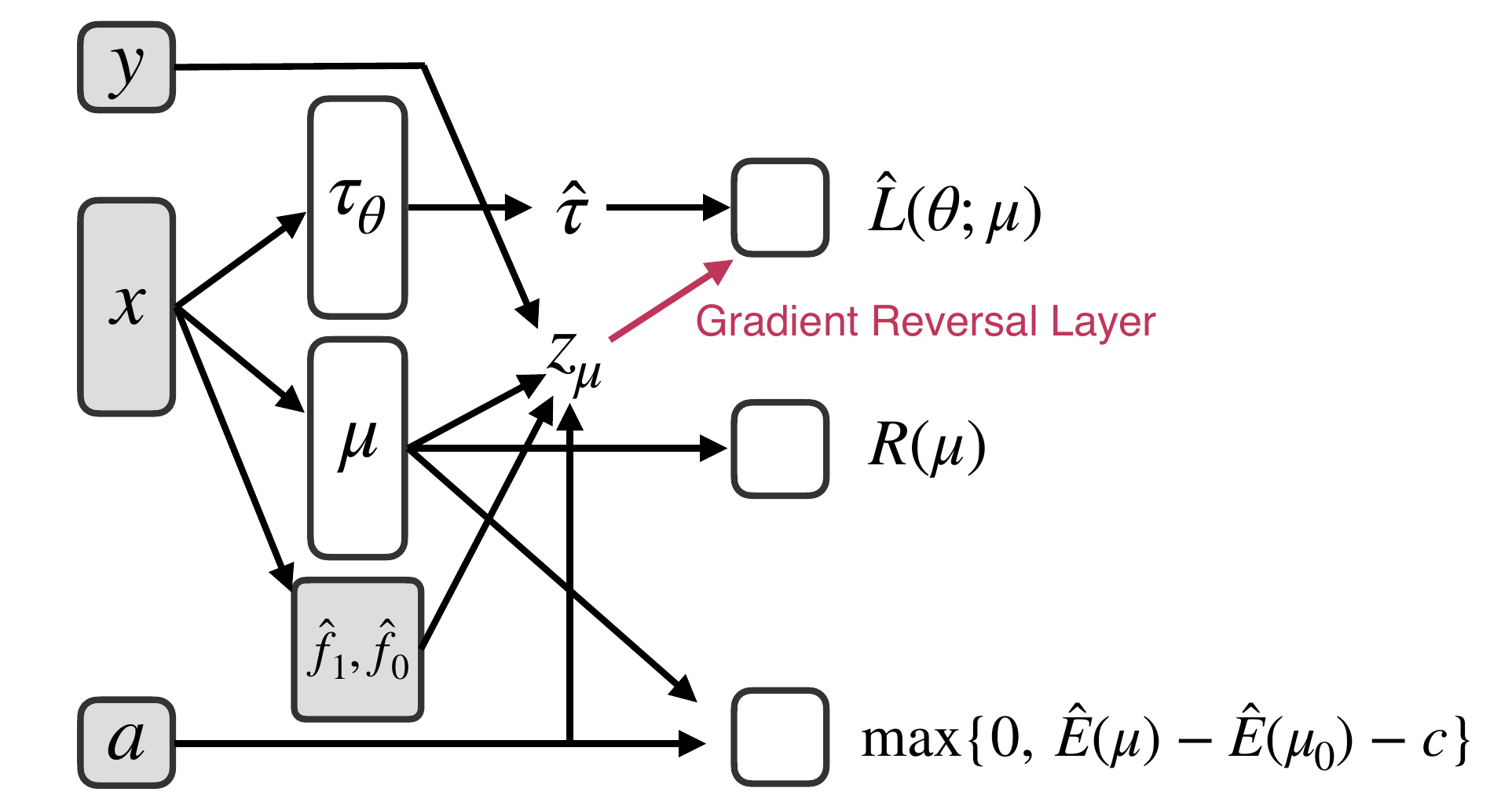}
       \label{fig: nudrnet}
         }
    \subfigure[NuSNet]{
        \includegraphics[keepaspectratio,width=0.46\textwidth]{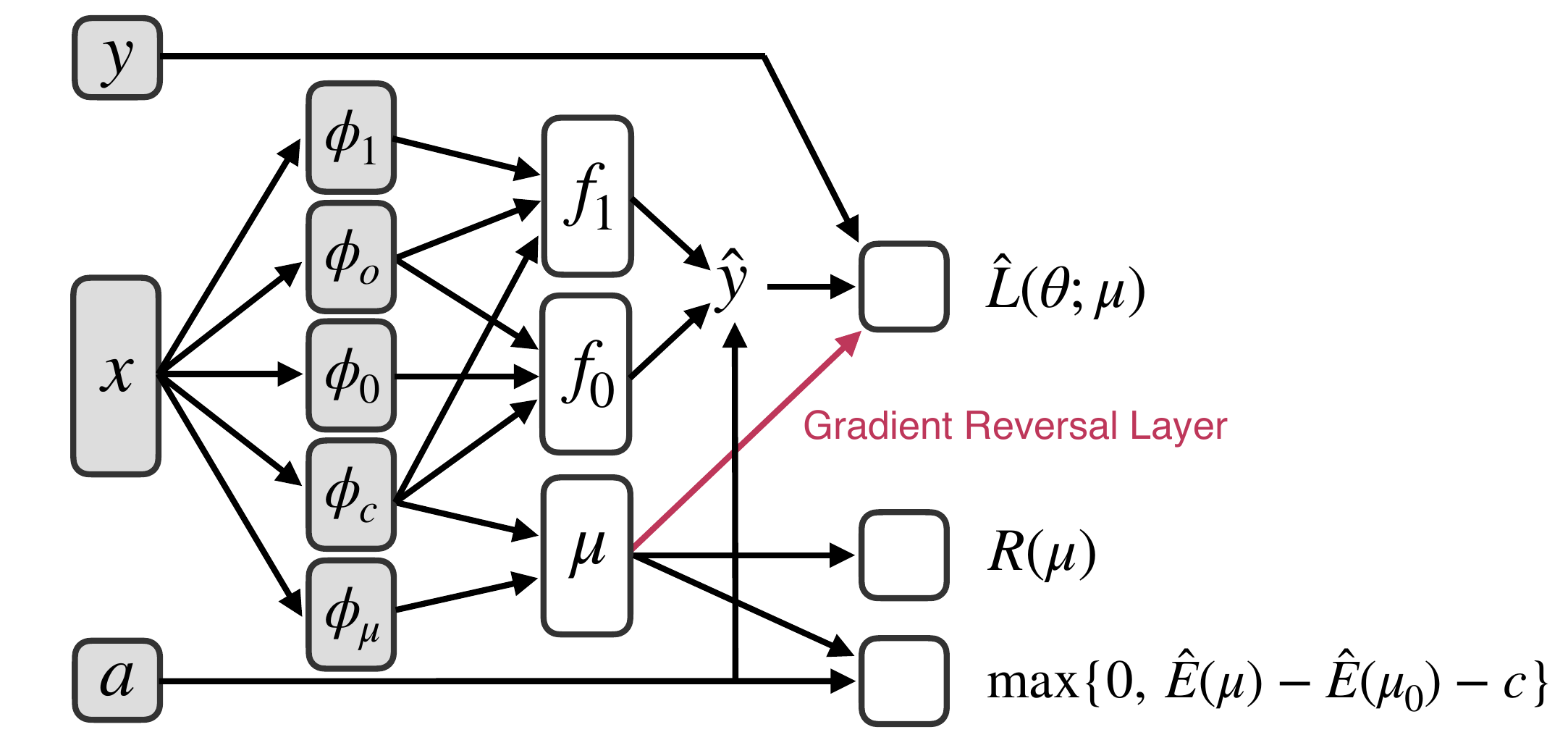}
       \label{fig: nusnet}
         }
    \caption{The training architecture of our network. Gray boxes are pre-trained and fixed. The nuisance function $\mu$ is trained to maximize the empirical loss $\hat L(\theta; \mu)$ while minimizing the other terms. This adversarial formulation can be presented as a joint minimization with the gradient reversal layers indicated in magenta.}
    \label{fig: NuNet architecture}
\end{figure}

\subsection{Nuisance-robust Shared Representation Network (NuSNet)}
\label{subsec: NuSNet}
We further investigate exploiting representation-based architectures.
Deep networks have also performed well in the context of causal inference owing to inductive bias of their representation layers.
Among such methods, SNet~\cite{curth2021nonparametric} has shown outstanding performance thanks to its flexible architecture.
SNet has three heads and five representation extractors.
Specifically, there is a head for $y_1$, $y_0$, and $\mu$, respectively, and a confounder representation $\phi_c$ shared by the three heads, a representation shared by the two outcome heads $\phi_o$, and the representations unique to each hypothesis layer $\phi_1, \phi_0, \phi_\mu$.
Denoting heads as $h_{\cdot}$, the SNet predictor is expressed as follows.
\begin{align}
    f(x, a) = &a h_1(\phi_1(x), \phi_o(x), \phi_c(x)) \\
    &+ (1-a) h_0(\phi_0(x), \phi_o(x), \phi_c(x)) 
\end{align}
The detailed architecture is illustrated in \cref{fig: nudrnet}.

We apply adversarial joint learning to this architecture.
The original SNet does not employ weighted loss.
As noted before, simply applying non-adversarial joint optimization of weight and the outcome model leads to catastrophic results, as seen in our experiment in \cref{sec: appendix additional results}.  %
We therefore first pre-train as in the original SNet and then tune with an adversarially-weighted risk and other regularization terms.
In the tuning phase, the shared representation layer is {\it fixed} and only heads are trained adversarially.

The weighted risk for the tuning phase is
\begin{align}
    \hat L(\theta; \mu) = \frac{1}{\sum_n w_\mu^{(n)}}\sum_n w_\mu^{(n)} (&y^{(n)} - a f_1(x^{(n)};\theta)
    \\ &+ (1-a) f_0(x^{(n)};\theta))^2,
\end{align}
where the instance weight $w_\mu^{(n)}$ is defined as \cref{def: weight}.
The nuisance $\mu$ is trained by {\it maximizing} the above risk, while the target parameter $\theta$ is trained by minimizing it.
The other regularization terms in the overall loss function $\hat J(\theta, \mu)$ are the same as in \cref{eq: adversarial objective}.
The pseudocode of NuSNet is presented in \cref{sec: Algorithm for NuSNet}.

\section{Experiment}
\label{sec: experiment}
To investigate the performance of the proposed methods, we conducted experiments on two synthetic datasets and two real-world datasets.

\subsection{Settings}
\label{subsec: experimental settings}

\paragraph{Synthetic data generation}
We basically followed the setup of~\cite{curth2021nonparametric} for synthetic data generation processes and model hyperparameters.
We generated $d=25$ dimensional multivariate normal covariates including 5 confounding factors that affect outcome and treatment selection, 5 outcome-related factors, and 5 purely CATE-related factors.
The true outcome and CATE models were nonlinear.
Other details are described in \cref{sec: appendix experimental details}.

In addition to the original additive noise (AN) setting $y= y_\mathrm{true} + \varepsilon$, we also tested on the multiplicative noise (MN) setting $y= y_\mathrm{true} (1 + \varepsilon),$ where $\varepsilon$ is drawn from a normal distribution with its average noise level was adjusted to the one in AN.
Most causal inference literature uses synthetic data or semi-synthetic data where only covariates are real and outcomes are synthesized under homogeneous noise, which do not reflect the heterogeneity of the real environment.
Noise heterogeneity is crucial since the optimistic-side error will likely emerge by weighting more on easy (less-noisy) instances.
We therefore set this up as a simple data generation model with heterogeneous noise.

\paragraph*{Real-world dataset}
Most well-established semi-synthetic datasets have real-world covariates and synthesized outcomes and do not reproduce up to noise heterogeneity, while the Twins dataset from~\cite{almond2005costs} had both potential outcomes recorded.
The first-year mortality of twins born at low birth weights was treated as potential outcomes for the heavier and lighter-born twins, respectively, and weighted subsampling was performed to reproduce the missing counterfactuals.
The test target is the difference between the noisy potential outcomes $\tau^{(n)}=y_1^{(n)}-y_0^{(n)}$ instead of CATE $\mathbb E[\tau^{(n)}|x]$.
We followed~\cite{curth2021inductive} for sampling strategy and other details.

Also, the Jobs dataset from~\cite{lalonde1986evaluating} has a randomized test set based on an experimental job training program and an observational training set.
Although we do not have both potential outcomes, we can substitute the true CATE label in the PEHE with the transformed outcome with known true propensity in the randomized test set, as proposed in~\cite{Curth2021ReallyDG}.
A detailed justification for this evaluation is explained in \cref{sec: transformed outcome equivalence}.

We focused on datasets with real-world outcomes to reproduce noise heterogeneity.
Most datasets typically used in causal inference literature have synthetic outcomes with additive noise, i.e., noise homogeneity is assumed.

\paragraph{Baseline methods}
We compared several representative methods for causal inference with DNN.
TNet was a simple plug-in method that estimates each potential outcome with two separate networks and then outputs the difference between their predictions.
SNet and SNet3 were decomposed representation-based methods that shared representation extractors for outcome and propensity estimation layers.
They have three kinds of extractors, namely, outcome-only, propensity-only, and shared representation for confounders.
SNet3 was a modified version, not using weighting in~\cite{curth2021nonparametric}, of what was originally proposed as DR-CFR~\cite{hassanpour2019counterfactual} and DeR-CFR~\cite{wu2022learning} for simultaneous optimization of the weights and the outcome model.
SNet3 with reweighting is presented in \cref{sec: appendix additional results}.
SNet was an even more flexible architecture than SNet3 that had shared and independent extractors for each potential outcome, proposed in~\cite{curth2021nonparametric}.
DRNet was an DNN implementation of DR-Learner~\cite{kennedy2020towards}, in which networks in the first step were independent for each potential outcome and propensity.
In the second step, a single CATE network was trained.

\paragraph{Hyparparameters and model selection}
We set the candidates of the hyperparameters as $\alpha_0 \in \{1, 10\}$, $\gamma \in \{1.5, 2, 3\}$, and $\beta \in \{10, 100, 300\}$.
For the experiment with Twins, we fixed them as $\alpha_0=10, \gamma=1.5, \beta=100.$
Model selection, including early stopping, was performed on the evidence measure of each method with $r=30$\% of the training data held out for validation and the pre-trained weights are used for the proposed methods.

\subsection{Results}

\paragraph{Synthetic datasets}

\begin{table}[tb]
    \centering
    \caption{PEHE on additive noise dataset (mean $\pm$ standard error on 10 runs). The best results are shown in bold, and comparable results are italicized and underlined.
    The five methods in the upper half are non-representation-based, while the five in the lower half are representation-based.
    DRNet-o uses an oracle (true propensity score) and is therefore shaded to indicate that it is unfair as a baseline.
    }
    \label{tb: pehe addnoise}
    \scalebox{0.9}{
    \begin{tabular}{lrrrr}
        \toprule
         Method	& \multicolumn{4}{c}{Additive noise} \\
         {}	&        \multicolumn{1}{c}{N=2000} 	&        \multicolumn{1}{c}{5000} 	&        \multicolumn{1}{c}{10000}	&        \multicolumn{1}{c}{20000} \\
\midrule
    	 TNet 	&   5.02 \scriptsize{$\pm$  0.14}	&   1.96 \scriptsize{$\pm$  0.06}	&   1.22 \scriptsize{$\pm$  0.03}	&   0.88 \scriptsize{$\pm$  0.02} \\
        RNet   &   6.88 \scriptsize{$\pm$ 0.40} &   2.31 \scriptsize{$\pm$ 0.07} &    1.41 \scriptsize{$\pm$ 0.07} &    1.02 \scriptsize{$\pm$ 0.04} \\
        DRNet	&   3.91 \scriptsize{$\pm$  0.14}	&   1.45 \scriptsize{$\pm$  0.04}	&   1.14 \scriptsize{$\pm$  0.11}	&   0.66 \scriptsize{$\pm$  0.03} \\
        \rowcolor{gray!30}
         DRNet-o& 4.15 \scriptsize{$\pm$ 0.16} & 1.37 \scriptsize{$\pm$ 0.04} & 0.82 \scriptsize{$\pm$ 0.02} & 0.53 \scriptsize{$\pm$ 0.01} \\
        \cmidrule{2-5}
        \bf NuDRNet&   4.02 \scriptsize{$\pm$  0.09}	&   1.52 \scriptsize{$\pm$  0.07}	&   0.86 \scriptsize{$\pm$  0.01}	&   0.54 \scriptsize{$\pm$  0.01} \\
        \midrule
         TARNet &   4.28 \scriptsize{$\pm$  0.19} &   1.74 \scriptsize{$\pm$  0.06} &   1.06 \scriptsize{$\pm$  0.02} &   0.76 \scriptsize{$\pm$  0.03} \\
        CFR    &   4.28 \scriptsize{$\pm$  0.19} &   1.71 \scriptsize{$\pm$  0.05} &   1.05 \scriptsize{$\pm$  0.02} &  0.76 \scriptsize{$\pm$  0.03} \\
        SNet3	&   3.85 \scriptsize{$\pm$  0.11}	&   1.54 \scriptsize{$\pm$  0.05}	&   0.99 \scriptsize{$\pm$  0.02}	&   0.62 \scriptsize{$\pm$  0.01} \\
        SNet 	&\bf  3.39 \scriptsize{$\pm$  0.11}	&   1.26 \scriptsize{$\pm$  0.03}	&   0.74 \scriptsize{$\pm$  0.02}	&   0.43 \scriptsize{$\pm$  0.01} \\
        \cmidrule{2-5}
        \bf NuSNet &\underline{\it 3.42 \scriptsize{$\pm$ 0.10}} &\bf 1.14 \scriptsize{$\pm$ 0.03} &\bf 0.60 \scriptsize{$\pm$ 0.01} &\bf 0.32 \scriptsize{$\pm$ 0.01} \\
\bottomrule
    \end{tabular}
    }
\end{table}

\begin{table}[tb]
    \centering
    \caption{PEHE on multiplicative noise dataset (mean $\pm$ standard error on 10 runs). The best results are shown in bold, and comparable results are italicized and underlined.
    The five methods in the upper half are non-representation-based, while the five in the lower half are representation-based.
    DRNet-o uses an oracle (true propensity score) and is therefore shaded to indicate that it is unfair as a baseline.
    }
    \label{tb: pehe mulnoise}
    \scalebox{0.9}{
    \begin{tabular}{lrrrr}
        \toprule
         Method	& \multicolumn{4}{c}{Multiplicative noise} \\
         {}	&        \multicolumn{1}{c}{N=2000} 	&        \multicolumn{1}{c}{5000} 	&        \multicolumn{1}{c}{10000}	&        \multicolumn{1}{c}{20000} \\
\midrule
    	 TNet 	&  11.97 \scriptsize{$\pm$  0.40}	&   5.93 \scriptsize{$\pm$  0.16}	&   3.76 \scriptsize{$\pm$  0.08}	&   2.52 \scriptsize{$\pm$  0.11} \\
        RNet   &  \bf 9.18 \scriptsize{$\pm$ 0.45} &   \underline{\it 5.12 \scriptsize{$\pm$ 0.16}} &    2.98 \scriptsize{$\pm$ 0.09} &    1.91 \scriptsize{$\pm$ 0.06} \\
        DRNet	&  \underline{\it 9.93 \scriptsize{$\pm$  0.40}}  &\underline{\it 4.80 \scriptsize{$\pm$  0.21}}	&   3.20 \scriptsize{$\pm$  0.24}	&   1.83 \scriptsize{$\pm$  0.10} \\
        \rowcolor{gray!30}
         DRNet-o&  9.63 \scriptsize{$\pm$ 0.32} & 4.30 \scriptsize{$\pm$ 0.15} & 2.44 \scriptsize{$\pm$ 0.07} & 1.48 \scriptsize{$\pm$ 0.05} \\
        \cmidrule{2-5}
        \bf NuDRNet&  \underline{\it 9.83 \scriptsize{$\pm$  0.45}}	&\bf  4.67 \scriptsize{$\pm$  0.32}	&\bf  2.44 \scriptsize{$\pm$  0.09}	&\bf  1.50 \scriptsize{$\pm$  0.06} \\
        \midrule
         TARNet &  10.25 \scriptsize{$\pm$  0.36} &   5.27 \scriptsize{$\pm$  0.18} &   3.17 \scriptsize{$\pm$  0.08} &   2.10 \scriptsize{$\pm$  0.09} \\
        CFR    &  10.10 \scriptsize{$\pm$  0.32} &   5.22 \scriptsize{$\pm$  0.18} &   3.16 \scriptsize{$\pm$  0.08} &  2.07 \scriptsize{$\pm$  0.09} \\
        SNet3	&  11.12 \scriptsize{$\pm$  0.36}	&   5.71 \scriptsize{$\pm$  0.25}	&   3.61 \scriptsize{$\pm$  0.14}	&   2.46 \scriptsize{$\pm$  0.09} \\
        SNet 	&  11.22 \scriptsize{$\pm$  0.33}	&   5.47 \scriptsize{$\pm$  0.17}	&   3.12 \scriptsize{$\pm$  0.08}	&   2.01 \scriptsize{$\pm$  0.07} \\
        \cmidrule{2-5}
        \bf NuSNet &11.94 \scriptsize{$\pm$ 0.22} & 5.78 \scriptsize{$\pm$ 0.16} & 3.19 \scriptsize{$\pm$ 0.07} & 1.72 \scriptsize{$\pm$ 0.06} \\
\bottomrule
    \end{tabular}
    }
\end{table}

The results are shown in \cref{tb: pehe addnoise} and \cref{tb: pehe mulnoise}.
Our proposed NuDRNet and NuSNet outperformed or was at least comparable to their baseline methods, DRNet and SNet, respectively.
On the other hand, representation-based methods (SNet3 and SNet) outperformed the transformed outcome methods (DRNet and NuDRNet).
The shared representation extractor of the confounding factors could be an effective inductive bias, especially with small sample sizes.
SNet is overall more accurate than SNet3 since it can also share parameters for components common to both potential outcomes.
Here, the comparison with DRNet-o using true propensity is interesting.
DRNet uses estimated propensity while DRNet-o uses the true one, and DRNet-o showed superior performance in the most cases.
NuDRNet, despite being propensity-agnostic, recovered an accuracy comparable to DRNet-o, which illustrates the benefit of robustness against the ambiguity of nuisance estimation.

\begin{table*}[!tb]
    \centering	\caption{MSE for noisy CATE on real-world datasets (mean $\pm$ standard error over 5 runs).
    It's important to note that for real-world datasets, we can only observe a noisy alternative of the true label $\tau$.
    That is, even as the sample size $N$ approaches infinity, the MSE for this noisy CATE estimate does not converge to zero.
    Only comparisons between methods are meaningful; comparisons to zero are not applicable.
    }
    \label{tb: twins}
    \scalebox{0.9}{
        \begin{tabular}{lllll}
            \toprule
            Method	& \multicolumn{3}{c}{Twins} & \multicolumn{1}{c}{Jobs}\\
            &        \multicolumn{1}{c}{2000} 	&        \multicolumn{1}{c}{5000} 	&        \multicolumn{1}{c}{11400}	& \multicolumn{1}{c}{N=2570}\\
            \midrule
            TNet   &  0.329 {\scriptsize $\pm$ .001} &  0.324 {\scriptsize $\pm$ .002} &  0.322 {\scriptsize $\pm$ .001} & 9.42 {\scriptsize $\pm$ .12}\\
            RNet   &  0.326 {\scriptsize $\pm$ .001} &  0.328 {\scriptsize $\pm$ .002} &  0.322 {\scriptsize $\pm$ .001} & 9.51 {\scriptsize $\pm$ .01}        \\
            DRNet  &\underline{\it 0.322 {\scriptsize $\pm$ .001}} &  0.323 {\scriptsize $\pm$ .001} &  0.323 {\scriptsize $\pm$ .001} & 9.10 {\scriptsize $\pm$ .02 }\\
            \cmidrule{2-5}
            \bf NuDRNet  &\bf 0.320 {\scriptsize $\pm$ .001} &\underline{\it 0.321 {\scriptsize $\pm$ .001}} &\bf 0.319 {\scriptsize $\pm$ .001} &\bf 8.62 {\scriptsize $\pm$ .06}\\
            \midrule
            TARNet &  0.326 {\scriptsize $\pm$ .001} &\underline{\it 0.320 {\scriptsize $\pm$ .001}} &\underline{\it 0.321 {\scriptsize $\pm$ .001}} & 9.33 {\scriptsize $\pm$ .02}\\
            CFR    &\underline{\it 0.323 {\scriptsize $\pm$ .002}} &\underline{\it 0.321 {\scriptsize $\pm$ .001}} &\underline{\it 0.321 {\scriptsize $\pm$ .001}} & 9.33 {\scriptsize $\pm$ .02}\\
            SNet3  &\underline{\it 0.322 {\scriptsize $\pm$ .001}} &\bf 0.319 {\scriptsize $\pm$ .001} &\underline{\it 0.320 {\scriptsize $\pm$ .001}} & 9.38 {\scriptsize $\pm$ .06}\\
            SNet   &\underline{\it 0.323 {\scriptsize $\pm$ .001}} &\underline{\it 0.320 {\scriptsize $\pm$ .001}} &\underline{\it 0.320 {\scriptsize $\pm$ .001}} & 9.36 {\scriptsize $\pm$ .06}\\
            \cmidrule{2-5}
            \bf NuSNet &   0.323 {\scriptsize $\pm$ .003} &\bf 0.319 {\scriptsize $\pm$ .002} &\underline{\it 0.320 {\scriptsize $\pm$ .002}} & 9.51 {\scriptsize $\pm$  .07}\\
            \bottomrule
            \end{tabular}
    }
\end{table*}

\cref{tb: pehe addnoise} also shows the results in the multiplicative noise setting.
NuDRNet outperformed other baselines when the sample size was relatively sufficient.
The pessimistic evaluation with more emphasis on hard instances would be a reasonable explanation for the superiority of the proposed method.
Even though representation decomposition should also be useful in the MN setting since the data generation model was the same as the AN setting except for noise, the weighting approach was superior to the representation decomposition method without weighting.
Again, NuDRNet recovered the DRNet-o with the true propensity.

\paragraph*{Real-world datasets}

Experiments on Twins data also showed the superiority of NuDRNet and NuSNet in most cases as in \cref{tb: twins}.
Note that the test target $\tau^{(n)}$ is noisy, and the value contains the noise variance.
\cref{tb: twins} also showed the results on Jobs data, which exhibits similar trends.
Note that the evaluation metric for Jobs, the MSE with respect to the transformed outcome, also contains constant noise.
Although it may seem that there is not much difference between any of the methods compared to zero, it is possible that there is a significant improvement in PEHE if unobservable constant noise was removed.

\section{Conclusion}

We proposed a learning framework for causal inference with weights in an adversarial end-to-end manner, instead of two-step plug-in estimation, based on our analysis of the generalization error for weighted losses.
Our framework is formulated as distributionally robust optimization (DRO) over a set of nuisance ambiguity with restricted squared weights.
We implemented this framework to doubly robust estimator (DRNet) and the shared representation learner (SNet) as NuDRNet and NuSNet, respectively.
Our proposed methods demonstrated superior performance compared to existing methods not based on weighting and methods based on two-step weighting.
To the best of our knowledge, this approach is the first attempt to apply DRO in causal inference, and it points to a new direction for making various multi-step inference methods end-to-end.
This framework has the potential for wide application to plug-in methods, not limited to the proposed method.

Future challenges include addressing ambiguity in representation layers and stabilizing adversarial learning.
Tackling these challenges is expected to lead to the development of more versatile and practical causal inference methods.

\section*{Acknowledgements}
The Version of Record of this contribution is published in the Neural Information Processing, ICONIP 2025 Proceedings and is available online at \url{https://doi.org/10.1007/978-981-95-4094-5_19}.

\bibliography{reference}

\newpage

\appendix

\onecolumn

\section{Proofs}
\label{sec: appendix proof}
\begin{theorem}[Theorem 4.1]
    Suppose that the instance-wise loss is bounded by $c'$ as $w^{(n)}_\mu \ell^{(n)}(\theta)\le c'$. Then, for any $\delta>0$, with probability at least $1-\delta$ over the choice of a sample $D$, the following holds for all $\theta \in \Theta$.
    \begin{align}
        L(\theta; \mu) - \hat L(\theta; \mu) \le 2 \mathfrak R_\mu(\ell \circ \Theta) + \frac{c'}{2} \sqrt{\frac{\log\left(1/\delta\right)}{N}}.
    \end{align}
\end{theorem}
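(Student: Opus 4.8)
The plan is to prove this as a standard one-sided uniform-convergence bound for the empirical process indexed by $\Theta$, treating the weighted instance loss as a single bounded function. First I would bundle the weight and loss into the effective per-sample quantity $g_\theta^{(n)} := w_\mu^{(n)}\ell^{(n)}(\theta)$, so that $\hat L(\theta;\mu) = \frac{1}{N}\sum_n g_\theta^{(n)}$ and $L(\theta;\mu) = \mathbb E[g_\theta]$. The hypothesis $w_\mu^{(n)}\ell^{(n)}(\theta)\le c'$ says precisely that each $g_\theta^{(n)}$ lies in a range of width at most $c'$, which is the only boundedness the argument needs; crucially, the weight stays fused to the loss so that extreme inverse-propensity values never appear on their own.

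Second, define the scalar statistic $\Phi(D) := \sup_{\theta\in\Theta}\bigl(L(\theta;\mu)-\hat L(\theta;\mu)\bigr)$. Replacing one sample $(x^{(n)},a^{(n)},y^{(n)})$ by an independent copy perturbs $\hat L$ by at most $c'/N$ and leaves $L$ untouched, so $\Phi$ satisfies the bounded-differences condition with constant $c'/N$ in each coordinate. Applying McDiarmid's inequality then gives, with probability at least $1-\delta$, a bound of the form $\Phi(D)\le \mathbb E_D[\Phi(D)] + O\!\left(c'\sqrt{\log(1/\delta)/N}\right)$, i.e.\ the advertised $\sqrt{\log(1/\delta)/N}$ deviation term scaled by a multiple of $c'$.

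Third, I would control the expectation $\mathbb E_D[\Phi(D)]$ by the usual symmetrization argument: introduce a ghost sample $D'$, write $L(\theta;\mu)=\mathbb E_{D'}[\hat L'(\theta;\mu)]$, push the supremum inside via Jensen, and insert Rademacher signs $\sigma_n\in\{\pm1\}$ since each difference $g_\theta^{(n)}-g_\theta'^{(n)}$ is symmetric. This yields $\mathbb E_D[\Phi(D)]\le 2\,\mathfrak R_\mu(\ell\circ\Theta)$, matching the weighted Rademacher complexity of \cref{def: weighted Rademacher complexity}, where the signed sum runs over the weighted summands $\sigma_n w_\mu^{(n)}\ell^{(n)}(\theta)$. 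Combining this with the concentration step gives the stated inequality.

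The main obstacle I anticipate is bookkeeping around the weights rather than any conceptual difficulty: because $w_\mu^{(n)}$ depends on $x^{(n)}$, the weight and the loss cannot be separated in either the McDiarmid step or the symmetrization, so the entire argument hinges on imposing the boundedness on the \emph{product} $w_\mu^{(n)}\ell^{(n)}(\theta)$ and keeping the weights attached to each summand throughout. A secondary point is pinning down the exact constant $\tfrac{c'}{2}$ in the deviation term, which comes down to tracking the precise width of the range of $g_\theta^{(n)}$ through McDiarmid; I would verify this constant explicitly by direct substitution rather than treat it as automatic.
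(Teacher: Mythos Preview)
Your plan matches the paper's proof almost exactly: McDiarmid on the uniform deviation $\Phi(D)$, then ghost-sample symmetrization with Jensen and Rademacher signs. The one discrepancy is at the final identification. Despite the suggestive notation $\ell\circ\Theta$, the paper's $\mathfrak R_\mu(\ell\circ\Theta)$ in \cref{def: weighted Rademacher complexity} is defined with the \emph{predictor} $\tau_\theta(x^{(n)})$ inside the signed sum, not with $\ell^{(n)}(\theta)$. So after symmetrization the paper inserts one more step, an element-wise contraction lemma (Ledoux--Talagrand type, using that the instance loss is Lipschitz in the prediction with the weight absorbed into the per-coordinate Lipschitz constant), to pass from $\sigma_n w_\mu^{(n)}\ell^{(n)}(\theta)$ to $\sigma_n w_\mu^{(n)}\tau_\theta(x^{(n)})$. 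Your write-up stops one step early and identifies the wrong quantity with the paper's definition; the fix is a single contraction application, and otherwise your strategy is the same as the paper's.
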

\begin{proof}
Let $\hat L_D$ denote the empirical loss $\hat L$ with its sample $D$ explicit.
From the McDiarmid's inequality, with probability at least $1-\delta$ we have
\begin{align}
    \sup_{\theta \in \Theta} L(\theta; \mu) - \hat L_D(\theta; \mu) \le 
    & \mathbb E_D \left[\sup_{\theta \in \Theta} L(\theta; \mu) - \hat L_D(\theta; \mu)  \right] + \frac{c'}{2} \sqrt{\frac{\log\left(1/\delta\right)}{N}}. \label{eq: McDiarmid applied}
\end{align}

Let $D'$ be another i.i.d. sample of size $N$ and $(w', \ell')$ be its instance weight and loss, respectively.
The expectation term is bounded by applying the symmetrization as follows.
\begin{align}
    \mathbb E_D &\left[ \sup_{\theta \in \Theta}L(\theta; \mu) - \hat L_D(\theta; \mu)  \right] \\
    =& \mathbb E_D \left[\sup_{\theta \in \Theta} \mathbb E_{D'} \hat L_{D'}(\theta; \mu) - \hat L_D(\theta; \mu)  \right] \\
    \le& \mathbb E_{D, D'}\left[\sup_{\theta \in \Theta} \hat L_{D'}(\theta; \mu) - \hat L_D(\theta; \mu)\right]  \label{eq:symmetrization1} \\ %
    =& \mathbb E_{D,D',\sigma\sim \{\pm 1\}^N}\left[ \sup_{\theta \in \Theta} \frac{1}{N} \sum_{n}^N\sigma_n \left( w'^{(n)}_\mu \ell'^{(n)}(\theta) - w^{(n)}_\mu \ell^{(n)}(\theta) \right)  \right]\\
    \le & \mathbb E_{D',\sigma}\left[ \sup_{\theta \in \Theta} \frac{1}{N} \sum_{n}^N \sigma_n w'^{(n)}_\mu \ell'^{(n)}(\theta) \right] + \mathbb E_{D,\sigma}\left[ \sup_{\theta \in \Theta} \frac{1}{N} \sum_{n}^N \sigma_n  w^{(n)}_\mu \ell^{(n)}(\theta) \right] \label{eq:symmetrization2}\\
    =& 2 \mathbb E_{D,\sigma}\left[ \sup_{\theta \in \Theta} \frac{1}{N} \sum_{n}^N \sigma_n w^{(n)}_\mu \ell^{(n)}(\theta) \right] \\
    \le& 2 \mathbb E_{D,\sigma}\left[ \sup_{\theta \in \Theta} \frac{1}{N} \sum_{n}^N \sigma_n w^{(n)}_\mu f_\theta \left(x^{(n)}\right) \right] \label{eq:symmetrization3} \\
    =& 2 \mathfrak R_\mu(\ell \circ \Theta),
\end{align}
where $\sigma_n \sim \{\pm 1\}$ denotes the Rademacher random variable with equal probability of being 1 or -1, i.e., $P(\sigma_n = 1) = P(\sigma_n = -1) = \frac{1}{2}$.
We applied Jensen's inequality in \cref{eq:symmetrization1} and the element-wise contraction lemma of the Rademacher complexity~\citeapp{tanimoto2022improving} in \cref{eq:symmetrization3}:
\begin{align}
    \mathfrak R(\{\ell_n(a_n)\}) \leq \mathfrak R(\{\rho_n a_n\}),
\end{align}
where $\rho_n$ is the Lipschitz constant of $\ell_n$.

\end{proof}

\begin{theorem}[Theorem 4.2]
    Let $\Theta$ be a bounded linear function class, i.e., $f_\theta(x) = \theta^\top x$ with $\|\theta\|\le B$.
    Furthermore, assume that $\|x\|_2 \le X$ for all $x \in \mathcal X$.
    Then, the following holds:
    \begin{align}
        \mathfrak R_\mu(\ell \circ \Theta) \le \frac{BX}{N} \sqrt{\mathbb E_D \|w_\mu\|_2^2}.
    \end{align}
    \begin{proof}
        We have the following.
        \begin{align}
            \mathfrak R_\mu(\ell \circ \Theta) 
                & = \mathbb E_{D, \sigma} \sup_{\theta} \frac{1}{N} \sum_n \sigma_n w_{\mu}^{(n)} \theta^{\top} x^{(n)} \\
                & = \mathbb E_{D, \sigma} \sup _\theta \theta^\top \frac{1}{N} \sum_n \sigma_n w_{\mu}^{(n)} x^{(n)} \\
                & = \mathbb{E}_{} \frac{B}{N}\left\|\sum \sigma_n w^{(n)} x^{(n)}\right\|_2 \\
                & \le \frac{B}{N} \sqrt{\mathbb E_{D, \sigma}\left\|\sum_n \sigma_n w^{(n)} x^{(n)}\right\|_2^2} \\
                & \le \frac{BX}{N} \sqrt{\mathbb E_D \|w_\mu\|_2^2}.
        \end{align}
        We applied Jensen's inequality in the penultimate step and the Cauchy-Schwarz inequality in the final step.
    \end{proof}
\end{theorem}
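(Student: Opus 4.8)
The plan is to exploit the linearity of the class so that the supremum over $\theta$ can be evaluated in closed form, leaving only a Rademacher-averaged norm to control. Starting from the definition in \cref{def: weighted Rademacher complexity} with $f_\theta(x^{(n)}) = \theta^\top x^{(n)}$, I would first pull $\theta^\top$ outside the sum and write the inner objective as $\theta^\top v$ with $v = \frac{1}{N}\sum_n \sigma_n w_\mu^{(n)} x^{(n)}$. The supremum of a linear functional over the ball $\{\|\theta\|\le B\}$ equals $B$ times its dual norm, so $\sup_{\|\theta\|\le B} \theta^\top v = B\|v\|_2$ by Cauchy--Schwarz (attained at $\theta = B v/\|v\|_2$). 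This is the conceptual crux: the supremum disappears \emph{before} any averaging, and what remains inside the expectation is just $\frac{B}{N}\left\|\sum_n \sigma_n w_\mu^{(n)} x^{(n)}\right\|_2$.

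Next I would apply Jensen's inequality, using concavity of the square root, to move the expectation inside the radical and bound the expected norm by the square root of the expected squared norm. The remaining task is to compute $\mathbb{E}_\sigma \left\|\sum_n \sigma_n w_\mu^{(n)} x^{(n)}\right\|_2^2$. Expanding the square produces the double sum $\sum_{n,m} \sigma_n \sigma_m w_\mu^{(n)} w_\mu^{(m)} (x^{(n)})^\top x^{(m)}$, and the key simplification is that the Rademacher signs are independent and mean-zero, so $\mathbb{E}[\sigma_n \sigma_m] = \delta_{nm}$ and every off-diagonal term vanishes. This collapses the double sum to the diagonal contribution $\sum_n (w_\mu^{(n)})^2 \|x^{(n)}\|_2^2$.

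Finally I would invoke the uniform bound $\|x^{(n)}\|_2 \le X$ to replace each $\|x^{(n)}\|_2^2$ by $X^2$, giving $X^2\|w_\mu\|_2^2$, then take the outer expectation over $D$ and reassemble the constants to reach $\mathfrak{R}_\mu(\ell \circ \Theta) \le \frac{BX}{N}\sqrt{\mathbb{E}_D \|w_\mu\|_2^2}$. The argument is essentially routine once the closed-form supremum step is in place; unlike \cref{thm: weighted excess risk bound with weighted Rademacher} it needs no concentration or contraction machinery. The only computation requiring genuine care is the expansion of the squared norm and the cancellation of cross terms via orthogonality of the Rademacher variables, which is the one place where an error could most easily slip in.
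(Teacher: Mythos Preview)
Your proposal is correct and follows essentially the same route as the paper: linearize to evaluate the supremum as $B\|v\|_2$, apply Jensen to pass to the expected squared norm, then collapse the cross terms via $\mathbb{E}[\sigma_n\sigma_m]=\delta_{nm}$ and bound $\|x^{(n)}\|_2^2\le X^2$. The paper labels that last step ``Cauchy--Schwarz'' but what it is actually doing is exactly the Rademacher-orthogonality expansion you spell out, so your account is if anything more precise.
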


\section{Equivalence between the transformed outcome and the CATE in expectation}
\label{sec: transformed outcome equivalence}
Our empirical risk and the evaluation criteria are based on the method of transformed outcome~\citeapp{athey2016recursive,horvitz1952generalization}.
The inverse-probability-weighted (IPW) transformed outcome is
\begin{align}
    \label{eq: transformed outcome PW}
    z^{(n)} = \frac{a^{(n)} y^{(n)}}{\mu(x^{(n)})} - \frac{\left(1-a^{(n)}\right) y^{(n)}}{1-\mu(x^{(n)})}.
\end{align}
The IPW transformed outcome $z$ in \cref{eq: transformed outcome PW} is equivalent to the CATE in the sense of its conditional expectation:
\begin{align}
    \mathbb E[z|x] &= \mathbb E_{Y_0,Y_1,A \sim \mu(x)} \left[Y_1 \frac{A}{\mu(x)} - Y_0 \frac{1-A}{1-\mu(x)} \middle| x \right] \\
&= \mathbb E[Y_1-Y_0 | x] =: \tau(x).
\end{align}
Then, letting $z=\tau(x) + \varepsilon$ with $\mathbb E[\varepsilon|x]=0$, we have \begin{align}
    &\mathbb E_{z,x}[(z-\hat \tau)^2] \\
    &= \mathbb E_{\varepsilon,x}[(\tau(x)+\varepsilon-\hat \tau)^2] \\
    &= \mathbb E_x[(\tau(x)-\hat \tau)^2] + 2\mathbb E_x [ \mathbb E_\varepsilon[\varepsilon|x] (\tau(x)-\hat \tau)] + \mathbb E_{\varepsilon,x}[\varepsilon^2] \\
    &= \mathrm{PEHE} +  \mathbb E_x \mathbb V[\varepsilon].
\end{align}
The MSE on $z$ is equivalent to our final metric PEHE except for a constant term.
The same equivalence can be derived for the doubly robust transformed outcome in \cref{eq: transformed outcome DR}.
This equivalence justifies our employed MSE on $z$ as the empirical risk $\hat L$ and the evaluation metric.

\section{Algorithm for NuSNet}
\label{sec: Algorithm for NuSNet}
\cref{alg2} presents the full algorithm of our proposed method in \cref{subsec: NuSNet}.
\begin{algorithm}[tb]
    \caption{Nuisance-robust Shared Representation Network (NuSNet)}
    \label{alg2}
    \begin{algorithmic}[1]
    \REQUIRE Training data $D = \{(x^{(n)}, a^{(n)}, y^{(n)})\}_n$, hyperparameters $\rho, \epsilon$, validation ratio $r$
    \ENSURE Trained network parameters $\theta_0, \theta_1, \phi$ and validation error
    \STATE Train $f_0,f_1,\pi$ by an arbitrary supervised learning method, e.g.: \\
    $\hat f_a \leftarrow \argmin_{f_a} \frac{1}{N} \sum_{n: a^{(n)}=a} (y^{(n)}-f_a(x^{(n)}))^2$ for each $a\in \{0,1\}$,\\
    $\pi_0 \leftarrow \argmin_{\pi} - \frac{1}{N} \sum_{n} a^{(n)}\log \pi(x^{(n)}) + (1-a^{(n)})\log (1-\pi(x^{(n)}))$
    \STATE Split train and validation $D_\mathrm{tr}, D_\mathrm{val}$ by the ratio $r$ \\
    \STATE $k \leftarrow 0$, $\alpha \leftarrow 0$, $\lambda \leftarrow 1$ \\
    \WHILE{Convergence criteria is not met}
    \FOR {each sub-sampled mini-batch from $D_\mathrm{tr}$}
    \STATE Update parameters with objective \cref{eq: adversarial objective} and step sizes $\eta_{\theta_0}, \eta_{\theta_1}, \eta_\phi$ from optimizes: \\
    \STATE $\theta_{0} \leftarrow \theta_{0} - \eta_{\theta_0} \nabla_{\theta_0} \hat J(\theta_0, \theta_1, \phi; \alpha, \lambda)$ \\
    \STATE $\theta_{1} \leftarrow \theta_{1} - \eta_{\theta_1} \nabla_{\theta_1} \hat J(\theta_0, \theta_1, \phi; \alpha, \lambda)$ \\
    \STATE $\phi \leftarrow \phi + \eta_\phi \nabla_\phi \hat J(\theta_0, \theta_1, \phi; \alpha, \lambda)$ \\
    \ENDFOR
    \STATE Check convergence criterion with validation error $\frac{1}{N_\mathrm{val}}\sum_{n \in D_\text{val}} (y^{(n)} - \hat{y}^{(n)})^2$ where $\hat{y}^{(n)} = a^{(n)} f_1(x^{(n)};\theta_1) + (1-a^{(n)}) f_0(x^{(n)};\theta_0)$
    \STATE $g_k \leftarrow \max\{0, \hat \evid (\mu_k) - \hat \evid (\mu_0)\}$
    \STATE $\alpha_{k+1} \gets \alpha_k + \lambda g_k$
    \IF {Constraint violation is not improved enough, i.e., $g_k < c g_{k-1}$}
      \STATE $\lambda \gets \gamma \lambda $\\
    \ENDIF
    \STATE $k \leftarrow k+1$
    \ENDWHILE
    \STATE {\bf return} $\theta_0, \theta_1, \phi$ and the last validation error for model selection
    \end{algorithmic}
\end{algorithm}

\section{Experimental details}
\label{sec: appendix experimental details}

\subsection{Simulation environment}
Our synthetic data in the additive noise (AN) setting was identical to the setting used in~\citeapp{curth2021nonparametric}, which was inspired by the decomposed covariate setting used in~\citeapp{Hassanpour2020Learning}.
We used $d=25$ dimensional normal covariates $x$.
Out of $25$ covariates, there were $d_o=5$ outcome-related covariates $x_o$ that affect only potential outcomes, $d_c=5$ confounders $x_c$ that affect both potential outcomes and treatment assignment, and $d_t=5$ covariates that affect treatment effect $x_t$.

The expected potential outcomes were calculated as follows.
\begin{align}
    \mathbb E[Y_0|x] &= \mathbf 1^\top \begin{bmatrix}
        x_c\\
        x_o
    \end{bmatrix}^2, \\
    \mathbb E[Y_1|x] &= \mathbb E[Y_0|x] + \tau(x),
\end{align}
Squaring works on an element-by-element basis
where $\mathbf 1 = [1,\cdots,1]^\top,$ squaring ${\cdot}^2$ is element-wise, and the treatment effect $\tau(x)$ was defined as
\begin{align}
    \tau(x) = \mathbf 1^\top x_t^2.
\end{align}
The true propensity that affects the treatment assignment was defined as
\begin{align}
    \mu(x)=\mu(a=1|x) = \sigma(\xi (\mathbf 1^\top x_c^2/d_c - \omega)),
\end{align}
where $\sigma$ was the sigmoid, $\xi$ was the strength of selection and was set as $\xi = 3$, and $\omega$ was adaptively set so that the median of the inside $\sigma$ would be $0$.

The expected factual outcome is defined as follows.
\begin{align}
    \bar y = A \mathbb E[Y_1|x] + (1-A) \mathbb E[Y_0|x]
\end{align}
where $A \sim \mathrm{Bernoulli}(\mu(A=1|x))$.
In the AN setting, the outcome was observed with additive noise as
\begin{align}
    y= \bar y + \varepsilon,
\end{align}
where $\varepsilon \sim \mathcal{N}(0,1)$.
In the multiplicative noise (MN) setting, it was
\begin{align}
    y= \bar y (1+ \varepsilon'),
\end{align}
where $\varepsilon' \sim \mathcal N(0,\xi)$ with its standard deviation $\xi=2 \big/ \left(\sqrt{\mathrm{Var}[\mathbb E[Y_1|x]]} + \sqrt{\mathrm{Var}[\mathbb E[Y_0|x]]}\right)$.

\subsection{Architecture and hyperparameters}
\paragraph{Synthetic data experiment}
We followed the implementation of~\citeapp{curth2021nonparametric} (BSD 3-Clause License) for the synthetic data experiment.
We employed the multi-layer perceptron with representation (input-side) layers and hypothesis (output-side) layers.
For the separated models (TNet, PWNet, DRNet, and NuDRNet), the representation layers were 3 layers with 200 units each and the hypothesis layers were 2 layers with 100 units each, for each prediction of $y_0$, $y_1$, and $a$.
SNet3 had 3 representations of outcome-only (50 units $\times$ 3 layers), treatment-only (50 units $\times$ 3 layers), and shared representation layers (150 units $\times$ 3 layers).
SNet had 5 representations for outcome related 3 representations ($y_1$-only, $y_0$-only, and outcome-shared, of 50 units $\times$ 3 layers), treatment-only (100 units $\times$ 3 layers), and shared for outcomes and treatment (100 units $\times$ 3 layers).
We used the exponential linear unit (ELU) activations~\citeapp{Clevert2016fast} and the optimizer was Adam~\citeapp{kingma2015adam}.
For NuDRNet, we applied model selection for the training epochs with the same validation data due to the instability of adversarial training.
NuDRNet sometimes diverges within the minimum training epochs or the early-stopping patience epochs of 10.
Therefore, it would be better to keep the best-so-far parameters in each epoch and output it.

\paragraph*{Real-world dataset experiment}
We followed~\citeapp{curth2021inductive} for the Twins experiment.
The representation layers and the hypothesis layers were only 1 in all methods.
For NuDRNet, instead of model selection of epochs, shorter minimum epochs of 40 were used to avoid overfitting the pre-trained $\mu$, as opposed to 200 in other methods as in~\citeapp{curth2021inductive}.

\paragraph*{Infrastructure} All the experiments were run on a
machine with 28 CPUs (Intel(R) Xeon(R) CPU E5-
2680 v4 @ 2.40GHz), 250GB memory, and 8 GPUs (NVIDIA GeForce GTX 1080 Ti).

\section{Additional results}
\label{sec: appendix additional results}
We present additional experimental results of the experiment on the synthetic dataset.
We additionally tested the following methods proposed in the original literature.
\begin{itemize}
    \item DR-Learner and R-Learner with cross-fitting as originally proposed~\citeapp{kennedy2020towards,nie2021quasi}.
    \item SNet3 with reweighted loss (DeR-CFR)~\citeapp{wu2022learning}
\end{itemize}

The results are shown in \cref{tb: pehe addnoise appendix} and \cref{tb: pehe mulnoise appendix}.
In our tested settings, these methods did not perform well.
Despite the asymptotic guarantees of these methods, they tend to set extreme weights, making it difficult to achieve consistent performance with realistic sample sizes, especially when models are complex or the overlap is limited.

\begin{table*}[tb]
    \centering
    \caption{PEHE on additive noise dataset (mean $\pm$ standard error on 10 runs). The best results are shown in bold, and comparable results are italicized and underlined.
    }
    \label{tb: pehe addnoise appendix}
    \scalebox{\tablescale}{
    \begin{tabular}{lrrrrrr}
        \toprule
         Method	&    \multicolumn{1}{c}{N=500}  	&        \multicolumn{1}{c}{1000} 	&        \multicolumn{1}{c}{2000} 	&        \multicolumn{1}{c}{5000} 	&        \multicolumn{1}{c}{10000}	&        \multicolumn{1}{c}{20000} \\
\midrule
    	 TNet 	&  18.55 \scriptsize{$\pm$  0.88}	&  13.89 \scriptsize{$\pm$  1.10}	&   5.02 \scriptsize{$\pm$  0.14}	&   1.96 \scriptsize{$\pm$  0.06}	&   1.22 \scriptsize{$\pm$  0.03}	&   0.88 \scriptsize{$\pm$  0.02} \\
         TARNet &  18.21 \scriptsize{$\pm$  1.12} &   8.77 \scriptsize{$\pm$  0.35} &   4.28 \scriptsize{$\pm$  0.19} &   1.74 \scriptsize{$\pm$  0.06} &   1.06 \scriptsize{$\pm$  0.02} &   0.76 \scriptsize{$\pm$  0.03} \\
         CFR    &  17.90 \scriptsize{$\pm$  1.18} &   8.77 \scriptsize{$\pm$  0.35} &   4.28 \scriptsize{$\pm$  0.19} &   1.71 \scriptsize{$\pm$  0.05} &   1.05 \scriptsize{$\pm$  0.02} &  0.76 \scriptsize{$\pm$  0.03} \\
         SNet3	&\bf 13.10 \scriptsize{$\pm$  0.65}	& \underline{\it 7.73 \scriptsize{$\pm$  0.34}}	&   3.85 \scriptsize{$\pm$  0.11}	&   1.54 \scriptsize{$\pm$  0.05}	&   0.99 \scriptsize{$\pm$  0.02}	&   0.62 \scriptsize{$\pm$  0.01} \\
         SNet3 w/ reweighting &	61.47 \scriptsize{$\pm$ 2.78} &	63.13 \scriptsize{$\pm$ 2.32} &	63.79 \scriptsize{$\pm$ 2.27} &	69.28 \scriptsize{$\pm$ 1.30} &	72.20 \scriptsize{$\pm$ 1.82} &	74.67 \scriptsize{$\pm$ 1.89} \\
         SNet 	&\underline{\it 14.14 \scriptsize{$\pm$  0.57}}	&\bf 7.17 \scriptsize{$\pm$  0.29}	&\bf  3.39 \scriptsize{$\pm$  0.11}	&\bf  1.26 \scriptsize{$\pm$  0.03}	&\bf  0.74 \scriptsize{$\pm$  0.02}	&\bf  0.43 \scriptsize{$\pm$  0.01} \\
         RNet &   18.47 \scriptsize{$\pm$ 3.25} &   13.65 \scriptsize{$\pm$ 1.14} &   6.88 \scriptsize{$\pm$ 0.40} &   2.31 \scriptsize{$\pm$ 0.07} &    1.41 \scriptsize{$\pm$ 0.07} &    1.02 \scriptsize{$\pm$ 0.04} \\
         RNet w/ cross-fit &  15.28 \scriptsize{$\pm$ 0.90} &   11.79 \scriptsize{$\pm$ 0.50} &   6.46 \scriptsize{$\pm$ 0.27} &   2.63 \scriptsize{$\pm$ 0.08} &    1.47 \scriptsize{$\pm$ 0.04} &    0.92 \scriptsize{$\pm$ 0.03} \\   
         PWNet	&  18.46 \scriptsize{$\pm$  0.82}	&  13.03 \scriptsize{$\pm$  0.54}	&  15.97 \scriptsize{$\pm$  0.68}	&  20.99 \scriptsize{$\pm$  1.25}	&  25.31 \scriptsize{$\pm$  2.32}	&  19.21 \scriptsize{$\pm$  1.36} \\
         DRNet	&  16.56 \scriptsize{$\pm$  0.75}	&  11.58 \scriptsize{$\pm$  0.66}	&   3.91 \scriptsize{$\pm$  0.14}	&   1.45 \scriptsize{$\pm$  0.04}	&   1.14 \scriptsize{$\pm$  0.11}	&   0.66 \scriptsize{$\pm$  0.03} \\
         DRNet w/ cross-fit &	162.27 \scriptsize{$\pm$ 75.79} &	18.83 \scriptsize{$\pm$ 1.55} &	9.07 \scriptsize{$\pm$ 0.53} &	6.51 \scriptsize{$\pm$ 1.38} &	5.90 \scriptsize{$\pm$ 1.19} &	3.94 \scriptsize{$\pm$ 1.41} \\
         \midrule
         NuDRNet	&  15.78 \scriptsize{$\pm$  0.69}	&  11.43 \scriptsize{$\pm$  0.48}	&   4.02 \scriptsize{$\pm$  0.09}	&   1.52 \scriptsize{$\pm$  0.07}	&   0.86 \scriptsize{$\pm$  0.01}	&   0.54 \scriptsize{$\pm$  0.01} \\

\bottomrule
    \end{tabular}
    }
\end{table*}

\begin{table*}[tb]
    \caption{PEHE on multiplicative noise dataset}
    \label{tb: pehe mulnoise appendix}
    \centering
    \scalebox{\tablescale}{
    \begin{tabular}{lrrrrrr}
        \toprule
         Method	&    \multicolumn{1}{c}{N=500}  	&        \multicolumn{1}{c}{1000} 	&        \multicolumn{1}{c}{2000} 	&        \multicolumn{1}{c}{5000} 	&        \multicolumn{1}{c}{10000}	&        \multicolumn{1}{c}{20000} \\
\midrule

    	 TNet 	&  22.03 \scriptsize{$\pm$  1.23}	&  17.59 \scriptsize{$\pm$  0.89}	&  11.97 \scriptsize{$\pm$  0.40}	&   5.93 \scriptsize{$\pm$  0.16}	&   3.76 \scriptsize{$\pm$  0.08}	&   2.52 \scriptsize{$\pm$  0.11} \\
         TARNet &  20.75 \scriptsize{$\pm$  0.99} &\bf  13.06 \scriptsize{$\pm$  0.53} & 10.25 \scriptsize{$\pm$  0.36} &   5.27 \scriptsize{$\pm$  0.18} &   3.17 \scriptsize{$\pm$  0.08} &   2.10 \scriptsize{$\pm$  0.09} \\
         CFR    &  20.30 \scriptsize{$\pm$  1.05} &\bf  13.06 \scriptsize{$\pm$  0.53} & 10.10 \scriptsize{$\pm$  0.32} &   5.22 \scriptsize{$\pm$  0.18} &   3.16 \scriptsize{$\pm$  0.08} &  2.07 \scriptsize{$\pm$  0.09} \\
         SNet3	&\bf 17.83 \scriptsize{$\pm$  0.94}	&  15.44 \scriptsize{$\pm$  0.65}	&  11.12 \scriptsize{$\pm$  0.36}	&   5.71 \scriptsize{$\pm$  0.25}	&   3.61 \scriptsize{$\pm$  0.14}	&   2.46 \scriptsize{$\pm$  0.09} \\
         SNet3 w/ reweighting &	60.87 \scriptsize{$\pm$ 2.58} &	61.92 \scriptsize{$\pm$ 2.04} &	64.04 \scriptsize{$\pm$ 1.93} &	67.63 \scriptsize{$\pm$ 0.80} &	70.63 \scriptsize{$\pm$ 1.73} &	74.21 \scriptsize{$\pm$ 1.60} \\
         SNet 	&\underline{\it 18.44 \scriptsize{$\pm$  0.86}}	&  15.73 \scriptsize{$\pm$  0.58}	&  11.22 \scriptsize{$\pm$  0.33}	&   5.47 \scriptsize{$\pm$  0.17}	&   3.12 \scriptsize{$\pm$  0.08}	&   2.01 \scriptsize{$\pm$  0.07} \\
         RNet &   32.26 \scriptsize{$\pm$ 6.86} &   11.87 \scriptsize{$\pm$ 0.30} &   \bf 9.18 \scriptsize{$\pm$ 0.45} &   \underline{\it 5.12 \scriptsize{$\pm$ 0.16}} &    2.98 \scriptsize{$\pm$ 0.09} &    1.91 \scriptsize{$\pm$ 0.06} \\
         RNet w/ cross-fit &   18.74 \scriptsize{$\pm$ 1.34} &   13.98 \scriptsize{$\pm$ 0.62} &  10.64 \scriptsize{$\pm$ 0.49} &   5.81 \scriptsize{$\pm$ 0.16} &    3.45 \scriptsize{$\pm$ 0.08} &    1.99 \scriptsize{$\pm$ 0.06} \\
         PWNet	&\underline{\it 18.97 \scriptsize{$\pm$  0.90}}	&\underline{\it 13.14 \scriptsize{$\pm$  0.54}}	&  15.95 \scriptsize{$\pm$  0.64}	&  21.08 \scriptsize{$\pm$  1.22}	&  25.63 \scriptsize{$\pm$  2.31}	&  20.92 \scriptsize{$\pm$  2.09} \\
         DRNet	&  19.96 \scriptsize{$\pm$  1.01}	&  15.34 \scriptsize{$\pm$  0.75}	&\underline{\it 9.93 \scriptsize{$\pm$  0.40}}  &\underline{\it 4.80 \scriptsize{$\pm$  0.21}}	&   3.20 \scriptsize{$\pm$  0.24}	&   1.83 \scriptsize{$\pm$  0.10} \\
         DRNet w/ cross-fit &	179.60 \scriptsize{$\pm$ 116.17} &	21.65 \scriptsize{$\pm$ 1.95} &	12.80 \scriptsize{$\pm$ 0.70} &	10.42 \scriptsize{$\pm$ 0.75} &	9.68 \scriptsize{$\pm$ 1.49} &	7.14 \scriptsize{$\pm$ 1.32} \\
         \midrule
         NuDRNet	&\underline{\it 19.96 \scriptsize{$\pm$  1.20}}	&  15.54 \scriptsize{$\pm$  0.57}	& \underline{\it 9.83 \scriptsize{$\pm$  0.45}}	&\bf  4.67 \scriptsize{$\pm$  0.32}	&\bf  2.44 \scriptsize{$\pm$  0.09}	&\bf  1.50 \scriptsize{$\pm$  0.06} \\
    \bottomrule
    \end{tabular}
    }
\end{table*}

\end{document}